\documentclass[letterpaper, 10 pt, conference]{ieeeconf}  

\IEEEoverridecommandlockouts                              

\usepackage{multicol}
\usepackage{graphicx}
\usepackage[font=small,labelfont=bf,justification=justified]{caption}
\usepackage{subcaption}
\usepackage{amsmath}
\usepackage{amssymb}
\usepackage{textcomp}
\usepackage{gensymb}
\usepackage{bm}
\usepackage{float}
\usepackage{color}
\usepackage{multirow}
\usepackage{hhline}
\usepackage{changepage}
\usepackage{comment}
\usepackage{booktabs}
\usepackage{algorithm}
\usepackage{algpseudocode}
\usepackage{multirow}     
\usepackage{adjustbox}
\usepackage{siunitx}      
\usepackage{multirow}
\usepackage{cite}
\newtheorem{theorem}{Theorem}
\usepackage{mathtools}
\usepackage[hidelinks, bookmarks=true]{hyperref}

\usepackage[most]{tcolorbox}
\usepackage{lmodern}
\tcbset{
  systembox/.style={
    colback=blue!8,
    colframe=gray,
    boxrule=0.6pt,
    arc=2pt,
    width=\linewidth
  },
  userbox/.style={
    colback=gray!6,
    colframe=gray,
    boxrule=0.6pt,
    arc=2pt,
    width=\linewidth
  },
  assistantbox/.style={
    colback=orange!8,
    colframe=gray,
    boxrule=0.6pt,
    arc=2pt,
    width=\linewidth
  }
}

\title{\LARGE \bf
Extended LASMP: Language Aided Subset
Sampling Based Motion Planner for Drone
Navigation
}

\author{Saswati Bhattacharjee$^{1}$, Anirban Sinha$^{2}$, Mukulika Ghosh$^{3}$ and Chinwe Ekenna$^{1}$
\thanks{$^{1}$ University at Albany, Department of Computer Science, NY, USA. \texttt{\{sbhattacharjee, cekenna\}@albany.edu}, $^{2}$ GE Aerospace Research, NY, USA. \texttt{\{anirban.sinha1@geaerospace.com\}
},
$^{3}$ Missouri State University, Department of Computer Science,Missouri, USA.
\texttt{\{mghosh@missouristate.edu\}}
}
}

\begin{document}

\maketitle
\thispagestyle{empty}
\pagestyle{empty}

\begin{abstract}
This paper proposes a command-aligned sampling distribution that incorporates navigation commands directly into the exploration process to improve the efficiency of sampling-based motion planning for quadcopters in three-dimensional environments. Inspired by the Language-Aided Subset Sampling Motion Planner, this paper extends command-aware subset sampling to aerial systems.
 Subset sampling is generalized to command-aligned regions, enabling structured 3D exploration while preserving probabilistic completeness. Natural language instructions are grounded into yaw-conditioned and altitude-aware geometric constraints using a prompt-based large language model for entity extraction and instruction validation. Extensive simulations in complex environments demonstrate that the proposed method generates collision-free, consistent trajectories while significantly reducing sampling effort compared to uniform RRT-based planners.
\end{abstract}

\section{Introduction}\label{sec:introduction}
Autonomous aerial navigation in cluttered three-dimensional environments remains a fundamental challenge in robotics. Quadcopters operating in indoor and semi-structured environments must reason over higher-dimensional state spaces, handle limited sensing, and generate dynamically feasible trajectories while avoiding obstacles. Sampling-based planners such as Rapidly-exploring Random Trees (RRT)~\cite{lavalle1998rapidly,kuffner2000rrt}
and its optimal variant RRT$^*$~\cite{karaman2011sampling} have been widely adopted for such problems due to their probabilistic completeness and scalability to high-dimensional spaces. However, uniform sampling over large configuration spaces often results in poor sample efficiency, particularly in cluttered environments. Several works have attempted to improve sampling efficiency through heuristic or informed sampling strategies,
including Informed RRT$^*$ and related approaches~\cite{gammell2014informed,gammell2015batch,kuo2018deep}. Although these approaches bias exploration using geometric cost-to-go heuristics, they fail to incorporate structured task-level instructions into sampling, leading to inefficient exploration and wasted computation.

\begin{figure}[h!t]
    \centering
    \includegraphics[width=0.5\textwidth]{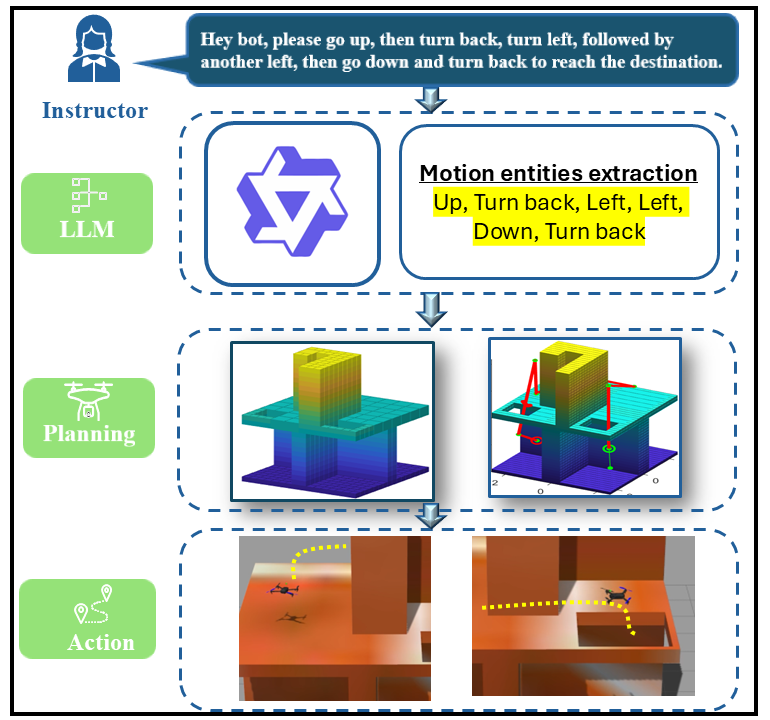}
    \caption{Extend LASMP: A drone navigating a multi-room environment based on natural language input while generating obstacle free paths on the fly. The red line shows the planned path while the yellow dotted lines show smoothened path executed on the robot.}
    \label{fig:fig_1}
\end{figure}

In many real-world applications, aerial robots are required to follow high-level human commands, such as \textit{go left}, \textit{move up}, or \textit{turn back and proceed forward}. Recent advances in large language models (LLMs) have enabled robots to interpret natural language instructions and map them to control or planning actions~\cite{ahn2022can,xie2023language,wu2024vision}. However, most existing approaches either rely on end-to-end learned policies or treat language as an external high-level planner, leaving the underlying sampling-based motion planner unchanged. In contrast, classical sampling-based planners ignore the directional intent embedded in commands, leading to unnecessary exploration of irrelevant regions of the planning space.

This work addresses the problem of \emph{command-aware sampling for quadcopter navigation}. Building upon the Language-Aided Subset Sampling Motion Planner (LASMP)\cite{bhattacharjee2025lasmp} developed for ground robots, we extend the framework to aerial platforms to plan in $\mathbb{R}^3 \times \mathbb{S}^1$. Unlike planar robots, quadcopters introduce an additional translational degree of freedom, requiring a transition from planar subset sampling to command-aligned three-dimensional subset sampling. Each navigation command is interpreted as a yaw update followed by directional translation or vertical motion, and the sampling distribution is dynamically restricted to subsets aligned with the commanded direction. This structured restriction concentrates exploration in more meaningful regions while preserving probabilistic completeness.

To robustly ground natural language instructions into geometric planning constraints, we integrate an LLM-based named entity recognition (NER) pipeline. Using few-shot prompt-based extraction, the system identifies task-relevant motion entities without requiring task-specific model retraining. The extracted entities are converted into yaw-conditioned and altitude-aware constraints
that directly shape the sampling subsets. Additionally, the LLM is employed to detect inconsistencies or ambiguities in user instructions and request corrective input, enhancing interaction reliability. An illustration of our work is shown in Fig. \ref{fig:fig_1} where a user can give the instruction and based on that instruction, our planner computes a feasible path.

The primary contributions of this paper are as follows:
\begin{itemize}
    \item A unified command-aware sampling framework that integrates directional navigation commands into the sampling distribution of a sampling-based planner together with a ray-guided switching strategy for consistent sequential command execution.
  \item Comprehensive empirical validation demonstrating improved sample efficiency, reduced computational cost, faster convergence compared to uniform sampling planners across diverse 3D environments.
    \item A complete ROS2–PX4–Gazebo integration of the framework, enabling physics-based execution and validating the real-time feasibility of command-aware subset sampling on a quadcopter flight stack.
      \item A \textit{probabilistic completeness analysis} for command-restricted sampling, extending classical RRT guarantees to structured subset sampling.
\end{itemize}

\section{Related Work}\label{sec:related_works}

\subsection{Sampling-Based Motion Planning}
Sampling-based motion planning methods such as Rapidly-exploring Random Trees (RRT)~\cite{lavalle1998rapidly} 
and its bidirectional variant RRT-Connect~\cite{kuffner2000rrt}
have been widely adopted for high-dimensional planning problems due to their probabilistic completeness and computational efficiency. Probabilistic Roadmaps (PRM)~\cite{kavraki2002probabilistic}
provide a multi-query alternative with similar theoretical guarantees.
Karaman and Frazzoli~\cite{karaman2011sampling} later introduced RRT$^*$ and PRM$^*$,
establishing asymptotic optimality within the sampling-based planning framework.
Additional planners such as FMT$^*$~\cite{janson2015fast} and BIT$^*$~\cite{gammell2015batch}
further improve efficiency through graph-based search and batch sampling.
Despite these advances, classical sampling-based planners typically rely on uniform sampling over the configuration space,
which can lead to poor sample efficiency in large or cluttered environments.

\subsection{Informed and Biased Sampling}
To improve exploration efficiency, several works restrict sampling to heuristically defined subsets of the planning space. Informed RRT$^*$~\cite{gammell2014informed} samples from a prolate hyperspheroid around a feasible solution, focusing search on regions that can improve path cost. Dynamic-domain RRT~\cite{yershova2005dynamic} reduces sampling near obstacles to improve convergence in cluttered spaces. While these methods bias sampling using geometric or cost-to-go heuristics, they do not incorporate structured task instructions or semantic constraints to draw samples. Our approach restricts sampling based on \emph{command-induced directional constraints}. Rather than relying on cost heuristics or solution refinement, we align sampling subsets with yaw-updated motion commands before finding a feasible solution. This enables semantically guided exploration and preserves probabilistic completeness.

\subsection{Kinodynamic and UAV Motion Planning}

For aerial robots, kinodynamic constraints must be considered during planning. Closed-loop RRT (CL-RRT)~\cite{luders2010bounds} integrates system dynamics into the planning loop and provides tracking guarantees. Other works address quadrotor motion planning in 3D environments using trajectory optimization or motion primitives~\cite{mellinger2011minimum,richter2016polynomial}. While these approaches account for dynamic feasibility, they typically sample uniformly in the state space or rely on optimization-based methods, without incorporating structured navigation commands into the sampling strategy. Our work focuses on the geometric planning layer in $\mathbb{R}^3 \times \mathbb{S}^1$, introducing command-aware subset sampling as a lightweight yet theoretically grounded mechanism for improving exploration efficiency in aerial navigation tasks.

\subsection{Instruction-Conditioned Planning}
Recent research has explored grounding natural language instructions into robotic decision-making. Approaches such as SayCan~\cite{ahn2022can,chen2024spatialvlm} and language-conditioned path planning frameworks~\cite{xie2023language, anderson2018vision, tellex2011understanding, hu2019safe} use large language models to map instructions to feasible actions.
Vision-language navigation methods~\cite{wu2024vision,shah2021ving} further integrate perception and language for navigation. However, many of these methods rely on learned policies or treat language as an external task planner.

In this work, we preserve a classical sampling-based planning backbone that integrates language-derived constraints directly into the geometric sampling distribution. This allows us to maintain theoretical guarantees such as probabilistic completeness while enabling structured exploration for the navigation.

\section{Problem Statement}\label{sec:problem_statement}
Let $D \subset \mathbb{R}^3 \times \mathbb{S}^1$ denote the planning space for a quadcopter,
    where a state $\bm{x}=(\bm{p},\psi)$ consists of position $\bm{p} \in \mathbb{R}^3$
and yaw $\psi \in \mathbb{S}^1$.
Let $D_{\mathrm{obs}} \subset D$ be the obstacle of the environment, and define the collision-free set as
\begin{equation}
    D_{\mathrm{free}} = D \setminus D_{\mathrm{obs}}
\end{equation}
Given a start state $\bm{x}_s \in D_{\mathrm{free}}$, a goal state $\bm{x}_g \in D_{\mathrm{free}}$, and an ordered sequence of navigation commands
\begin{equation}
    \Phi = [\phi_1, \phi_2, \ldots, \phi_K]
\end{equation}
where each $\phi_j$ belongs to the command set
\begin{equation} \label{eq:cmd_list}
    \begin{aligned}
  \{ & \texttt{STRAIGHT}, \texttt{LEFT}, \texttt{RIGHT}, \texttt{UP}, \texttt{DOWN}, \\
     & \texttt{TURN BACK}, \texttt{LEFT INPLACE}, \texttt{RIGHT INPLACE} \}
    \end{aligned}
\end{equation}
The objective is to compute a collision-free trajectory
\begin{eqnarray} \nonumber
    & \sigma : [0,1] \rightarrow D_{\mathrm{free}} \\
    \text{s.t.} & \quad \sigma(0) = \bm{x}_s \qquad \sigma(1) = \bm{x}_g
\end{eqnarray}
Each command $\phi_j$ imposes a motion constraint on the trajectory. Commands $\{\texttt{LEFT}, \texttt{RIGHT}\}$ correspond to a yaw update after a translation along the updated heading, while $\texttt{STRAIGHT}$ corresponds to forward motion without yaw change. Commands $\{\texttt{UP}, \texttt{DOWN}\}$ correspond to pure vertical motion. While $\{\texttt{TURN BACK}, \texttt{LEFT INPLACE}, \texttt{RIGHT INPLACE}\}$ correspond to yaw update without translation. The trajectory must execute the commands in order.

\noindent
Uniform sampling over $D$ ignores the directional structure induced by $\Phi$ and leads to excessive exploration of irrelevant regions. The problem addressed in this work is therefore to design a sampling-based planner that restricts sampling to dynamically constructed subsets
\begin{equation}
    D_{\mathrm{smp}}(\bm{x},\phi_j) \subset D
\end{equation}
aligned with the commanded motion, while preserving probabilistic completeness.

\section{Methodology}\label{sec:methodology}

Fig. \ref{fig:workflow} illustrates the workflow of the proposed planner. The method consists of two modules: a language module and a planner module. The language module is responsible for language grounding, while the planner module computes the path from the start location to the goal location based on the natural language instruction.
\begin{figure}[!t]
    \centering
    \includegraphics[width=0.5\textwidth]{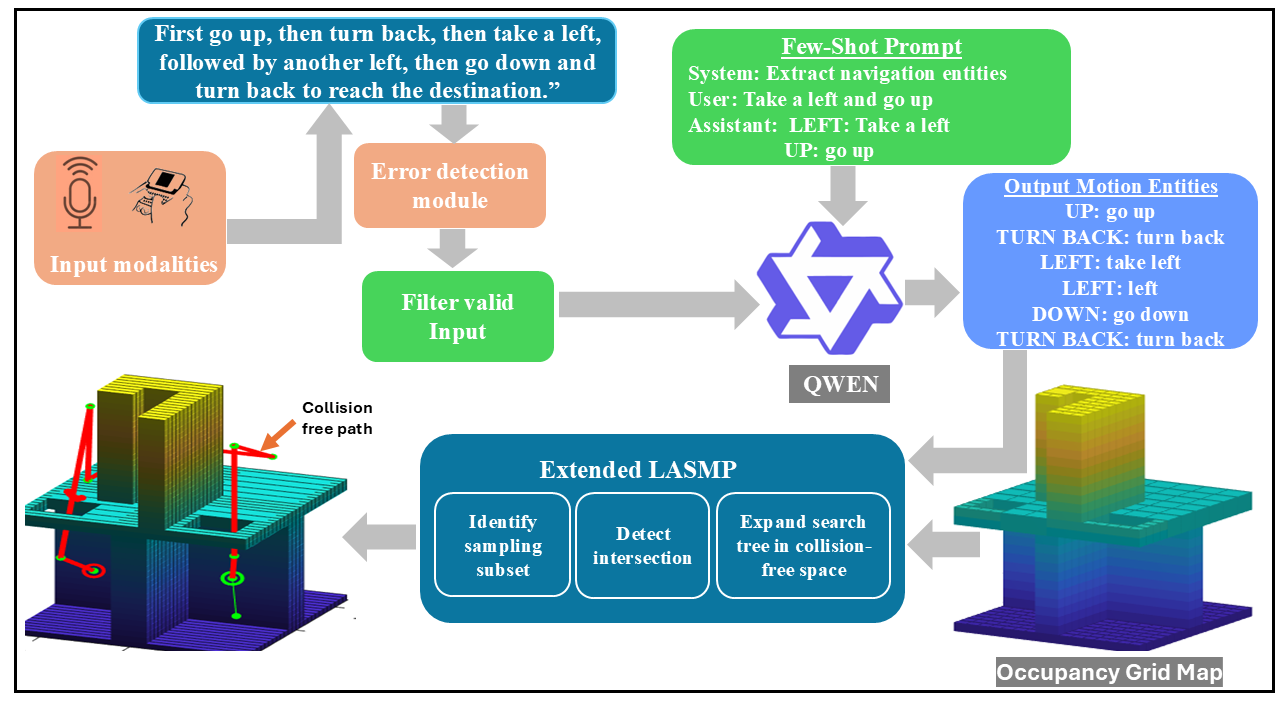}
    \caption{(a) Architectural overview of the Extended LASMP pipeline: The Qwen~\cite{yang2025qwen3} model process the natural language instructions to extract a sequence of directional primitives (e.g., \texttt{UP}, \texttt{LEFT}). This turn list serves as a heuristic constraint for the Extend LASMP framework, which optimizes the search space toward a task-relevant subset of the planning  domain.}
    
    \label{fig:workflow}
\end{figure}

We propose a command-aware sampling-based planner presented in Algorithm \ref{alg:ext_lasmp} that incrementally constructs a search tree in the state space
$D \subset \mathbb{R}^3 \times \mathbb{S}^1$ while respecting the ordered command sequence $\Phi = [\phi_1,\ldots,\phi_K]$. 
The algorithm integrates command-aligned subset sampling with ray-based switching during incremental tree expansion.
The key idea is to restrict sampling to dynamically defined subsets $D_{\mathrm{smp}}(\bm{x},\phi_j) \subset D$ that are aligned with the motion induced by the current command $\phi_j$. This focuses exploration on relevant regions of the environment while preserving probabilistic completeness. The planner maintains a tree $T \subset D_{\mathrm{free}}$ rooted at $\bm{x}_s$ and grows as valid nodes become available. 

\subsection{Command Interpretation}
Each command $\phi_j$ induces a desired motion direction.
Let $\psi(\bm{x})$ denote the yaw component of state $\bm{x}$.

For a given command $\phi_j$ the yaw is updated as,
\begin{equation}
    \psi_j = \psi(\bm{x}) + \Delta\psi(\phi_j)
\end{equation}
where
\begin{equation}
    \Delta\psi(\phi_j) \in \left\{0,\ \tfrac{\pi}{2},\ -\tfrac{\pi}{2},\ \pi \right\}
\end{equation}
Note that the commands $\{\texttt{UP}, \texttt{DOWN}\}$ correspond to pure vertical motion. Following a yaw update the resulting unit heading vector is defined as
\begin{equation}
    \mathbf{v}_j =
\begin{cases}
(\cos\psi_j,\ \sin\psi_j,\ 0), & \text{horizontal motion}\\[4pt]
(0,0,\pm 1), & \text{vertical motion}
\end{cases}
\end{equation}

\subsection{Command-Aligned Sampling Subsets}
To expand node $\bm{x} \in T$ for command $\phi_j$, samples are drawn from the subset centered at $\bm{p}(\bm{x})$ and oriented along $\mathbf{v}_j$.
\begin{equation}
    D_{\mathrm{smp}}(\bm{x},\phi_j) \subset D
\end{equation}

Let $(h,w,d)$ denote the forward, lateral, and vertical extents of the subset, then it can be defined as
\begin{equation}
    C(\bm{x},\phi_j) =
     \bm{p} \in \mathbb{R}^3 :
    \begin{cases}
    0 \le \mathbf{v}_j^\top (\bm{p} - \bm{p}(\bm{x})) \le h\\
    \|(I - \mathbf{v}_j \mathbf{v}_j^\top)(\bm{p} - \bm{p}(\bm{x}))\| \le w\\
    |z - z(\bm{x})| \le d
    \end{cases}
\end{equation}

The full subset is then
\begin{equation}
    D_{\mathrm{smp}}(\bm{x},\phi_j) = C(\bm{x},\phi_j) \times \{\psi_j\}
\end{equation}
This construction restricts sampling to states that are forward-aligned with the commanded motion.

\subsection{Tree Expansion}
At each iteration, the planner selects an expansion node $\bm{x} \in T$ and constructs the corresponding command-aligned sampling subset $D_{\mathrm{smp}}(\bm{x},\phi_j)$. A sample $\bm{x}_{\mathrm{rand}}$ is drawn uniformly from this subset, and the planner attempts to extend the tree toward it using a steering function with maximum step size $\eta$, i.e., $\bm{x}_{\mathrm{new}} = \textsc{Steer}(\bm{x}, \bm{x}_{\mathrm{rand}}, \eta)$. If the local path between $\bm{x}$ and $\bm{x}_{\mathrm{new}}$ lies entirely within the collision-free set $D_{\mathrm{free}}$, the new state is added to the tree as a child of $\bm{x}$. Repeating this procedure incrementally expands the tree within command-aligned regions of the state space.

\subsection{Command Execution and Switching}
Commands are executed sequentially according to the ordered list $\Phi = [\phi_1,\ldots,\phi_K]$ where the command transitions are triggered by the detection of an unobstructed region in the direction of the next commanded motion. During each extension from $\bm{x}$ toward a sampled state $\bm{x}_{\mathrm{rand}}$, a ray is cast from the new state $\bm{x}_{\mathrm{new}}$ in the direction corresponding to the subsequent command $\phi_{j+1}$. Let $\mathbf{v}_{j+1}$ denote the unit vector associated with $\phi_{j+1}$. The ray is defined as
\begin{equation}
    \bm{r}(t) = \bm{p}(\bm{x}_{\mathrm{new}}) + t\,\mathbf{v}_{j+1}, \qquad t \ge 0
\end{equation}
If the ray does not intersect any obstacle within a prescribed look-ahead distance $L$, i.e.,
\begin{equation}
    \bm{r}([0,L]) \subset D_{\mathrm{free}}
\end{equation}
the planner identifies the presence of an opening in the environment in the direction of the next command. In this case, the current command $\phi_j$ is considered complete, and the planner switches to command $\phi_{j+1}$. This strategy enables the planner to detect navigable corridors, intersections, or free-space expansions. The switching condition depends on the local visibility of free space in the commanded direction, making it suitable for cluttered environment. Section~\ref{sec:prob_complete} outlines the assumptions of the Extended LASMP and provides a detailed proof of its probabilistic completeness property. 

\begin{algorithm}[t]
\caption{Extended LASMP
\\ \textbf{Input:} $\mathcal{M}$(map), $\bm{x}_s=(\bm{p}_s,\psi_s)$, $\bm{x}_g$, $\Phi$, $\eta$, $h,w,d$, $L$
\\ \textbf{Output:} Collision-free trajectory $\tau$ }
\label{alg:ext_lasmp}
\begin{algorithmic}[1]
\State $T\gets\{\bm{x}_s\}$; $\textsc{Parent}(\bm{x}_s)\gets\varnothing$; $\bm{x}_{\text{curr}}\gets \bm{x}_s$; $j\gets 1$
\While{$j\le K$ \textbf{and} $\|\bm{p}(\bm{x}_{\text{curr}})-\bm{p}(\bm{x}_g)\|>\epsilon_g$}
    \State $\psi_j \gets \textsc{wrap}(\psi(\bm{x}_{\text{curr}})+\textsc{YawIncrement}(\phi_j))$
    \State $\mathbf{v}_j \gets \textsc{CommandDirection}(\phi_j,\psi_j)$
    \If{$j<K$}
        \State $\psi_{j+1}\gets \textsc{wrap}(\psi_j+\textsc{YawIncrement}(\phi_{j+1}))$
        \State $\mathbf{v}_{j+1}\gets \textsc{CommandDirection}(\phi_{j+1},\psi_{j+1})$
    \EndIf

    \State $D_{\text{smp}} \gets \textsc{BuildCuboidSubset}(\bm{x}_{\text{curr}},\mathbf{v}_j,h,w,d)$
    \State $\bm{x}_{\text{rand}} \gets \textsc{SampleUniform}(D_{\text{smp}})$
    \State $\psi(\bm{x}_{\text{rand}})\gets \psi_j$
    \State $\bm{x}_{\text{near}} \gets \textsc{SelectNode}(T, \bm{x}_{\text{rand}})$

    \State $\bm{x} \gets \bm{x}_{\text{near}}$
    \For{$s=0$ \textbf{to} $\eta$ \textbf{step} $\Delta s$}
        \State $\bm{x}' \gets \textsc{SteerStep}(\bm{x},\bm{x}_{\text{rand}},\Delta s,\psi_j)$
        \If{\textbf{not} \textsc{CollisionFree}$(\bm{x},\bm{x}',\mathcal{M})$} \State \textbf{break} \EndIf
        \State $\bm{x} \gets \bm{x}'$
        \If{$j<K$ \textbf{and} \textsc{RayFree}$(\bm{p}(\bm{x}),\mathbf{v}_{j+1},L)$}
            \State \textbf{break} \Comment{Opening found for next cmd}
        \EndIf
    \EndFor

    \If{$\bm{x} \neq \bm{x}_{\text{near}}$}
        \State $T\gets T\cup\{\bm{x}\}$; $\textsc{Parent}(\bm{x})\gets \bm{x}_{\text{near}}$; $\bm{x}_{\text{curr}}\gets \bm{x}$
    \EndIf
    \If{$j<K$ \textbf{and} \textsc{RayFree}$(\bm{p}(\bm{x}_{\text{curr}}),\mathbf{v}_{j+1},L)$}
        \State $j \gets j+1$
    \EndIf
\EndWhile

\If{$\|\bm{p}(\bm{x}_{\text{curr}})-\bm{p}(\bm{x}_g)\|\le \epsilon_g$}
    \State $\tau \gets \textsc{ExtractPath}(T,\bm{x}_{\text{curr}})$; \Return $\tau$
\EndIf
\end{algorithmic}
\end{algorithm}

\subsection{Natural Language-Based Instruction Grounding}
Given a navigation instruction represented as a token sequence 
\(
\mathcal{S} = \{w_1, w_2, \dots, w_n\},
\)
the input is first processed by an error detection module 
\(
\mathcal{C}(\mathcal{S})
\)
that filters malformed commands. The errors are categorized into three types: misleading entities, typographical errors, and ambiguous directions. Misleading entities occur when motion-related entities are interpreted differently from their intended meaning (e.g., the word left is used in a context other than indicating a direction). Typographical errors arise from misspelled words, while ambiguous directions include instructions such as 'Go over there and continue moving,' which do not specify a clear destination or direction. In all of the above scenarios, the model asks the user to provide the correct input. Only valid instructions 
\(
\mathcal{S}^{*} = \mathcal{C}(\mathcal{S})
\)
are forwarded to the entity extraction stage. The task is formulated as conditional sequence generation, where a pretrained autoregressive language model produces a structured output sequence \( u \) conditioned on the validated instruction and few-shot demonstrations \( \mathcal{D}_{\text{few}} \):
\begin{equation}
P(u \mid \mathcal{S}^{*}, \mathcal{D}_{\text{few}}) 
= \prod_{m=1}^{T1} 
P(u_{m} \mid u_{<m}, \mathcal{S}^{*}, \mathcal{D}_{\text{few}})
\end{equation}

where $T_1$ denotes the length of the generated token sequence $u$. The generated sequence \( u \) is parsed to obtain a set of entity–label pairs 
\begin{equation}
\mathbb{E} = \{(e_i, c_i)\}_{i=1}^{k}, \quad k=|E| \quad \text{no. of pairs}
\end{equation}
where \( e_i \subseteq \mathcal{S}^{*} \) denotes a contiguous text and \( c_i \) its corresponding entity type. The extracted entities are mapped to navigation primitives $\Phi$ for path planning (more details in section~\ref{sec:results}.B). As shown in Fig.~\ref{fig:workflow}, the user instruction is first processed by an error detection module and then passed to the Qwen model for motion-related entity extraction. We employ a ChatML-based few-shot prompting strategy for motion-related named entity recognition, as illustrated in Fig.~\ref{fig:PromptEngineering}.

\begin{figure}[ht]
\centering

\begin{tcolorbox}[systembox, title=\textbf{System}]
You are an expert in Named Entity Recognition. Extract all specified entity types from the provided text.
\end{tcolorbox}

\vspace{1pt}

\begin{tcolorbox}[userbox, title=\textbf{User}]
Extract turn list such as: LEFT, RIGHT, STRAIGHT, UP, DOWN, TURN BACK

\textit{Text:} ``Please head up, then turn back and go down to move to the destination.''
\end{tcolorbox}

\vspace{1pt}

\begin{tcolorbox}[assistantbox, title=\textbf{Assistant}]
UP: head up \\
TURN BACK: turn back \\
DOWN: go down
\end{tcolorbox}

\caption{An illustration of Chat Markup Language (ChatML)few-shot prompt to extract the motion entities. Messages are structured by roles such as system, user, and assistant. The system message defines the model’s behavior, the user message defines the task, and the assistant message can include an example output.}

 \label{fig:PromptEngineering}
\end{figure}



\section{Results}\label{sec:results}

\subsection{Experimental Setup}
The path was computed on an Intel i7 machine (16 GB RAM, NVIDIA GeForce RTX GPU).
The GPU was utilized by the language module and the planning module was implemented on the CPU. The Qwen3-1.7B large language model was loaded from the Hugging Face model repository using the AutoModelForCausalLM and AutoTokenizer classes provided by the Transformers library to implement the language module. Fig. \ref{fig:pointcloud_env} shows all environments utilized in experiment. As shown in the Fig. \ref{fig:pointcloud_env}, three two-story building environments Env-1, Env-2, and Env-3 and one single-story environment, Env-4, were considered for the experiment. 
 
\begin{figure}[ht]
    \centering
    \includegraphics[width=0.4\textwidth]{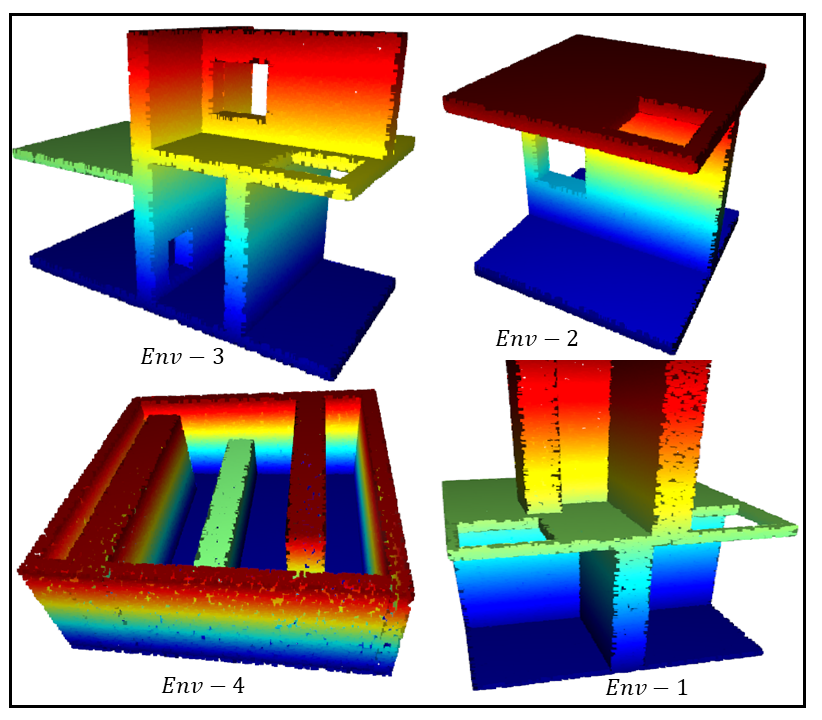}
    \caption{Diverse 3D point cloud environments for experimental validation. The environments vary in obstacle density, and corridor width to evaluate the robustness of command-aware subset sampling under different structural complexities.}
    \label{fig:pointcloud_env}
\end{figure}
\vspace{-2mm}
\subsection{Natural Language grounding}
To illustrate the effectiveness of the NER task, consider the instruction :
\begin{quote}
\texttt{Hey Bot, go up, then take left, followed by a right to enter hallway.}
\end{quote}

After validation through the filtering module 
\(
\mathcal{S}^{*} = \mathcal{C}(\mathcal{S}),
\)
the instruction is provided to the language model along with the few-shot demonstrations 
\(
\mathcal{D}_{\text{few}},
\) 
the model generates an output sequence
\(
u \sim \mathbb{P}(u \mid \mathcal{S}^{*}, \mathcal{D}_{\text{few}}),
\)
which for the given case yields a continuous string:
\begin{eqnarray} \nonumber
u = 
\{
    \texttt{UP:} \texttt{go up},
    \texttt{LEFT:} \texttt{take left}, \\ \nonumber
    \texttt{RIGHT:} \texttt{followed by a right}\}
\end{eqnarray}

The sequence is parsed to obtain structured entity set
\begin{eqnarray} \nonumber
    \mathbb{E} = \{(\text{``go up''}, \texttt{UP}),
    (\text{``take left''}, \texttt{LEFT}),\\ \nonumber
     (\text{``followed by a right''}, \texttt{RIGHT})
    \}
\end{eqnarray}
These entity--label pairs are subsequently mapped to navigation primitives
\(
f:\mathbb{E} \rightarrow \Phi,
\)
resulting in the executable action sequence:
$\Phi = \{\texttt{UP},\; \texttt{LEFT},\; \texttt{RIGHT}\}$.
The action sequence $\Phi$ can be utilized by the planner. We implemented a voice interface where spoken commands are transcribed using the Whisper~\cite{radford2023robust} speech recognition model and then processed by the language understanding module described in Section~\ref{sec:methodology}. The performance of Qwen was evaluated on a custom dataset containing 100 erroneous navigation instructions. The model identified the navigation entities with 75\% accuracy
\vspace{-2mm}
\subsection{Performance Evaluation of Extended LASMP}
The \textit{Extended LASMP}, is evaluated against RRT, RRT* and RRT Connect using uniform sampling strategy to draw random samples.
\begin{figure}[ht]
    \centering
    \fbox{%
        \begin{minipage}{0.95\linewidth}
            \centering
            
            \includegraphics[width=\linewidth]{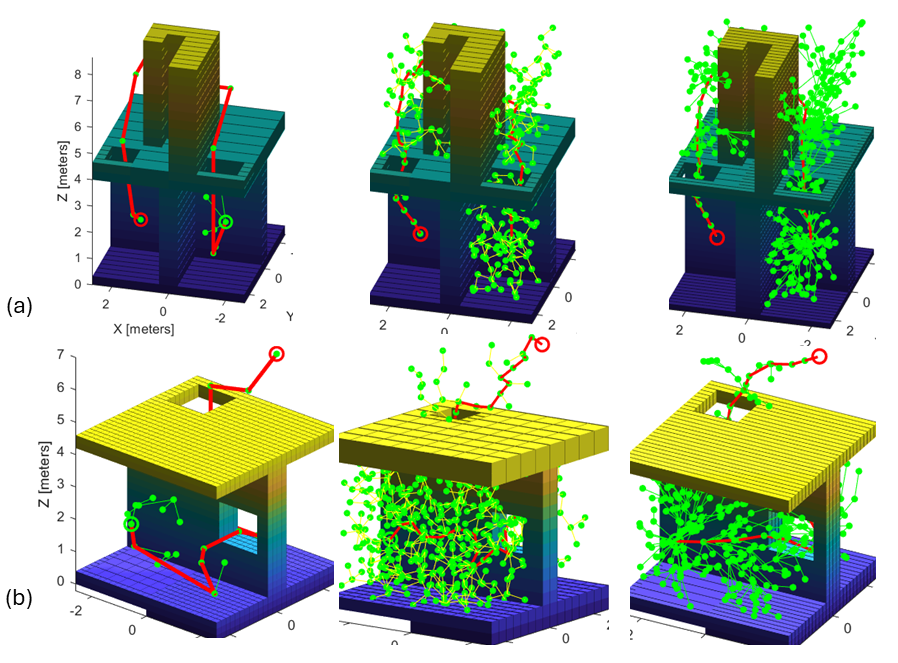}
            
            \includegraphics[width=\linewidth]{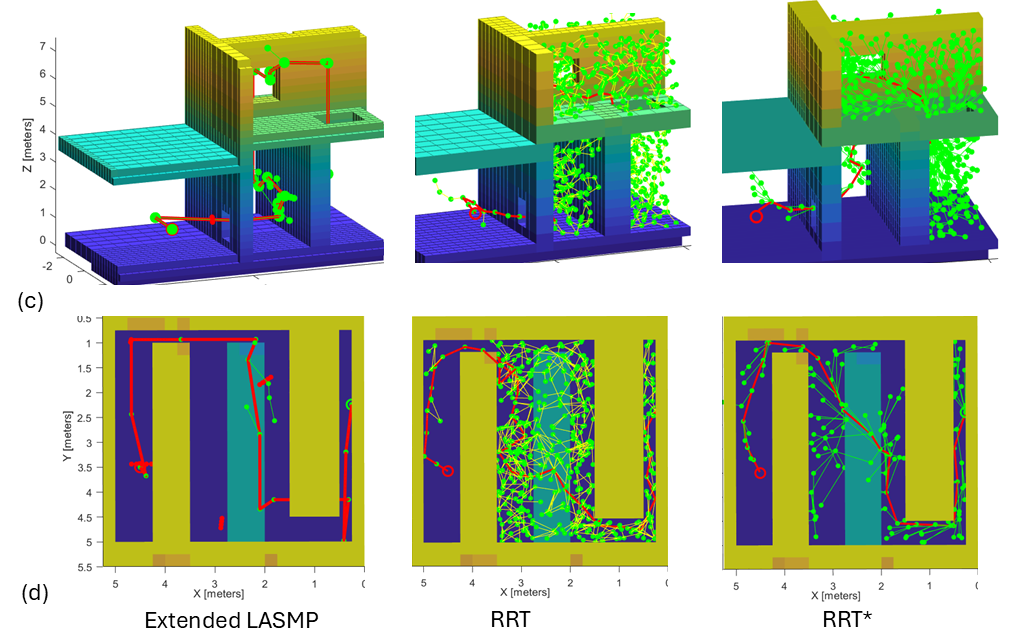}
            
        \end{minipage}
    }
    \caption{A comparative evaluation of the planners Extended LASMP, RRT, and RRT* in terms of the number of nodes added to the search tree. The red line denotes the collision-free path extracted from the search tree. Extended LASMP needed significantly less number of nodes as compared to RRT and RRT*.}
    \label{fig:grouped}
\end{figure}
\begin{table}[!t]
\centering
\caption{Quantitative comparison across four environments. 
Lower values are better for all metrics ($\downarrow$). 
Results are reported over 10 runs and executed in i7 processor in Ubuntu 24.04 LTS.}
\label{tab:performance_comparison}
\begin{adjustbox}{width=0.5\textwidth}
\begin{tabular}{|llcccc|}
\toprule
Environments & Methods & Nodes & Queries  & Time [s]  & Path Length [m]  \\
\midrule

\multirow{3}{*}{Env1} 
 & Extended LASMP & $\mathbf{8}\pm 2.09$ & $\mathbf{12} \pm 4.6$ & $\mathbf{2.361} \pm 0.73 $  & $9.95 \pm  0.76$ \\
 & RRT      & $ 491 \pm 155.71 $ & $ 1546 \pm 550.31$ & $2.618\pm0.77$  & $14.57\pm1.70$ \\
 & RRT*     & $493\pm142.24$  & $ 1356 \pm 326.91 $ & $ 15.095 \pm 5.28$  & $\mathbf{9.18 } \pm 0.77 $\\
 & RRTConnect &$ 140 \pm 42.41 $ & $749 \pm 282.05$ &$2.398 \pm 0.95$ & $20.53 \pm 1.12$\\
\midrule

\multirow{3}{*}{Env2} 
 & Extended LASMP & $\mathbf{9}\pm6.21$ & $\mathbf{11} \pm 6.10$& $ 1.825 \pm 1.40$  & $11.18 \pm 1.27$  \\
 & RRT      & $ 252 \pm 60.85 $ & $ 701 \pm 289.76 $ & $ 3.501 \pm 1.06 $ & $12.97 \pm 1.60 $ \\
 & RRT*     & $ 401 \pm 144.07$  & $ 1069\pm 426.26$ & $11.603 \pm 6.13$  & $\mathbf{10.41} \pm 0.82$\\
 & RRTConnect & $186\pm44.90$ & $455\pm149.09$ & $\mathbf{1.541}\pm0.36$ & $19.20\pm2.63$\\
\midrule

\multirow{3}{*}{Env3} 
 & Extended LASMP & $\mathbf{14}\pm8.37$ & $\mathbf{19}\pm8.36$  & $6.418\pm3.94$  & $15.76\pm1.57$  \\
 & RRT      & $1042\pm159.93$   & $3879\pm549.13$   & $8.368\pm0.93$  & $20.652\pm2.15$  \\
 & RRT*     & $664\pm300.82$  & $2430\pm940.28$  & $23.581\pm12.44$  & $\mathbf{12.972}\pm1.16$  \\
 & RRTConnect & $425\pm73.94$ & $1715\pm309.90$ & $\mathbf{4.933}\pm1.14$ & $26.14\pm2.63$\\
\midrule

\multirow{3}{*}{Env4} 
 & Extended LASMP & $\mathbf{20 }\pm7.48$& $\mathbf{35}\pm11.49$ & $7.61\pm4.34$  & $14.561\pm1.70$ \\
 & RRT     &  $289\pm111.29$ & $1748\pm540.61$ & $5.531\pm1.45$  & $14.187\pm0.65$  \\
 & RRT*     &  $204\pm73.77$ & $1055\pm354.41$ & $13.316\pm6.26$ & $\mathbf{11.211}\pm 0.27$  \\
 & RRTConnect & $68\pm10.02$ & $537\pm193.42$ & $\mathbf{2.131}\pm 0.78$ & $14.965\pm0.93$ \\
\bottomrule
\end{tabular}
\end{adjustbox}
\end{table}
\begin{figure}[!t]
    \centering
    \includegraphics[width=0.5\textwidth]{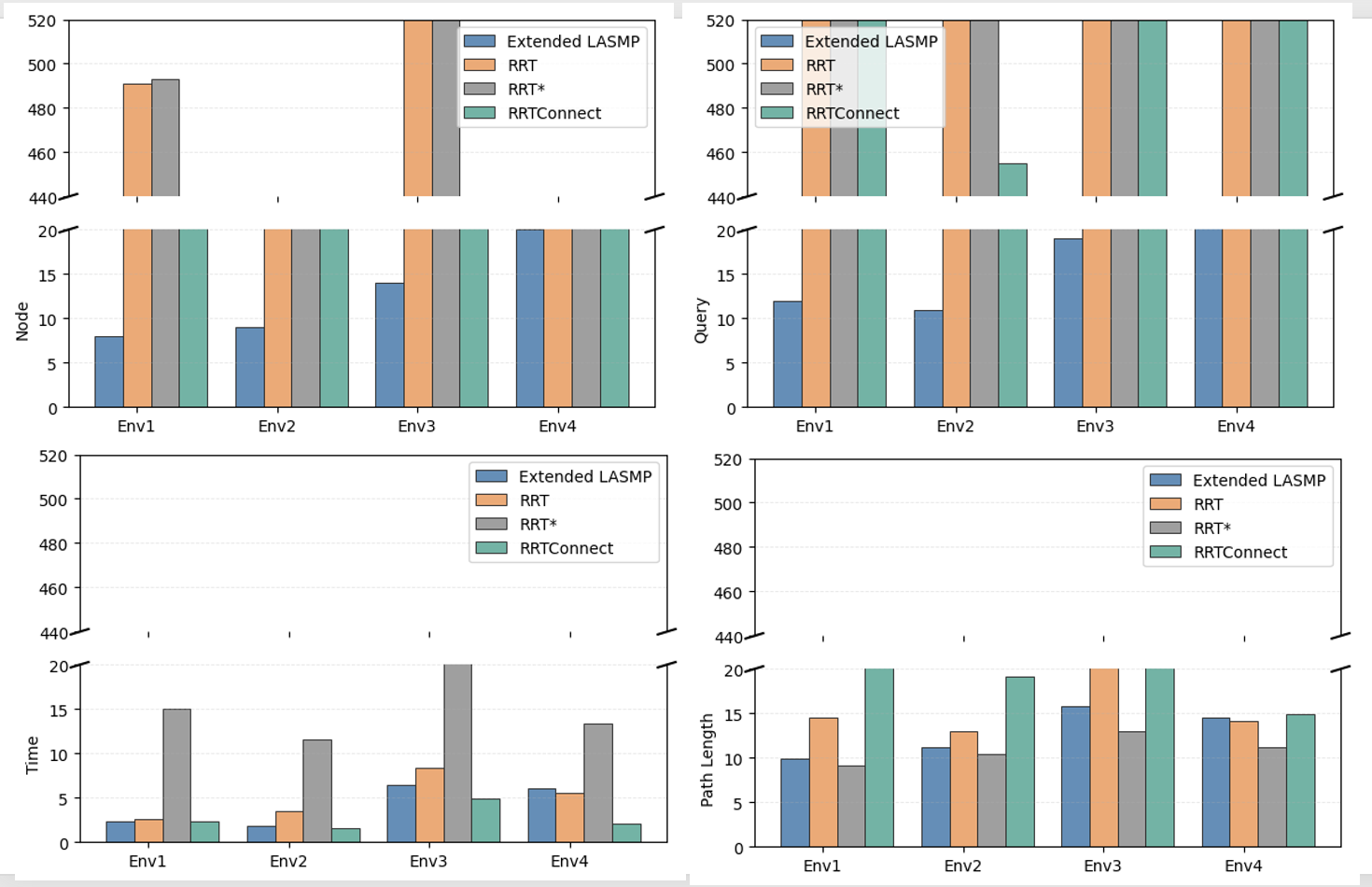}
    \caption{Performance comparison of Extended LASMP, RRT, RRT* and RRTConnect w.r.t number of nodes added to the search tree, the number of random-state queries, the computation time, and PathLength. 
    }
    \label{fig:comparison_plot}
\end{figure}
The  results are reported in Table \ref{tab:performance_comparison} and the computed paths for each environment are illustrated in Fig. \ref{fig:grouped}. The comparison across various metrics is reported on Fig. \ref{fig:comparison_plot}. The corresponding start and goal locations for each environment and the turn lists extracted from user instructions are reported in Table \ref{tab:fourcol}. Table \ref{tab:planner_parameters} shows the parameters of the algorithms.

\begin{table}[ht]
\centering
\caption{The start and goal configurations with corresponding turn lists used to evaluate the performance of Extended LASMP.}
\label{tab:fourcol}
\begin{adjustbox}{width=0.5\textwidth}
\begin{tabular}{|lccp{3cm}|}
\toprule
Environments & Start[m] & Goal[m] & Turn list \\
\midrule
Env1 & [-1.5 0.75 2.0 ]  & [1.33 1.33 2.0] & [UP, TURN BACK, LEFT, LEFT, DOWN, TURN BACK] \\
\midrule
Env2 & [-1.5 -1.5 1.8] & [1.5 1.25 7.0] & [LEFT , LEFT, UP, TURN BACK] \\
\midrule
Env3 & [-1.5 4.0 2.5 ] & [2.0 -2.75 1.5 ] & [UP , RIGHT IN PLACE, RIGHT, DOWN, TURN BACK, RIGHT]\\
\midrule
Env4 & [0.25 2.25 0.3] & [4.5 3.5 0.3 ] & [RIGHT, RIGHT, LEFT, LEFT] \\
\bottomrule
\end{tabular}
\end{adjustbox}
\end{table}

\begin{table}[t]
\centering
\caption{Planning parameters used for each algorithm.}
\label{tab:planner_parameters}
\resizebox{\linewidth}{!}{%
\begin{tabular}{|lcccc|}
\hline
\textbf{Parameter} & \textbf{Extended LASMP} & \textbf{RRT} & \textbf{RRT*} & \textbf{RRT-Connect} \\
\hline
Step size (m)         & 0.1 & 0.5 & 0.4 & 0.5 \\
Maximum iterations & 200 & 5000 & 4000 & 5000 \\
Goal tolerance(m)     & 2 & N/A & N/A & N/A \\
Goal bias          & N/A & 0.1 & 0.1 & 0.1 \\
Rewiring (m)          & N/A & N/A & 1 & N/A \\
\hline
\end{tabular}}
\end{table}

All results are averaged over $10$ independent runs to ensure statistical consistency. 
Across the four environments, Extended LASMP reduced the number of nodes by approximately 93.1\%–98.7\%, 90.2\%–98.4\%, and 70.6\%–96.7\% compared with RRT, RRT*, and RRT-Connect, respectively. Similarly, it reduced the number of random-state queries by approximately 98.0\%–99.5\%, 96.7\%–99.2\%, and 93.5\%–98.9\%, respectively.
A comparable trend is observed in computation time and the path length.  
Although Extended LASMP occasionally produces paths that are slightly longer, approximately 2\% longer than RRT and 20\% longer than RRT* this is consistent with the objective of the proposed framework. 
Our method achieves the lowest computation time in Env1. In the remaining environments, its computation time is slightly higher than RRT-Connect. 
Fig.~\ref{fig:comparison_plot} illustrates the comparison between our method, RRT, RRT*, RRT-Connect. As shown in the node and query plots of the figure, Extended LASMP requires significantly fewer nodes and queries compared to the baseline sampling-based algorithms. In terms of computation time and path length, our method is comparable to RRT, RRT* and RRT-Connect. The primary goal of our planner is not strict path optimality, but adherence to user-provided navigation instructions. Consequently, when a user specifies a longer route for moving one location to another, our planner follows that trajectory even if a shorter path exists. Overall, Extended LASMP achieves significant improvements in sampling efficiency while maintaining competitive computational performance relative to the baseline planners. 

\subsection{Simulation Setup and Execution in ROS2 and Gazebo}
We validated the proposed Extended LASMP in ROS2-Gazebo simulation environment. Static indoor environments with cluttered obstacles as shown in Fig.~\ref{fig:pointcloud_env} are considered. The quadcopter dynamics is simulated using the default ODE physics engine in Gazebo, while PX4~\cite{px4_ros2_user_guide} is employed as the low-level flight controller for attitude stabilization and waypoint tracking. The planner generates waypoints in $\mathbb{R}^3 \times \mathbb{S}^1$, which are subsequently smoothed and interpolated before execution. The planner is implemented as a node that publishes a sequence of waypoints as \texttt{geometry\_msgs/PoseStamped} messages. The setpoints are tracked through a position control interface at 50~Hz, ensuring dynamic feasibility.

Opening detection is implemented by querying the occupancy map along a ray in the direction of the next command. A command switch is triggered if no obstacle is detected within a predefined look-ahead distance. Figure~\ref{fig:gz_sim_result} shows the action sequence of the quadcopter for the user command: \textit{Go up, then turn back, take a left followed by another left, then go down and turn back to reach your destination}. Key evaluation metrics are summarized in Table~\ref{tab:performance_comparison}, while the total execution time in Gazebo is $23.2$~s. The simulation results confirm that command-aware sampling produces dynamically executable trajectories and integrates seamlessly with the ROS2–PX4 flight control stack. If the extracted turn list is infeasible, the planner reverts to the standard RRT for planning.

\begin{figure}
    \centering
    \includegraphics[width=1.03\linewidth]{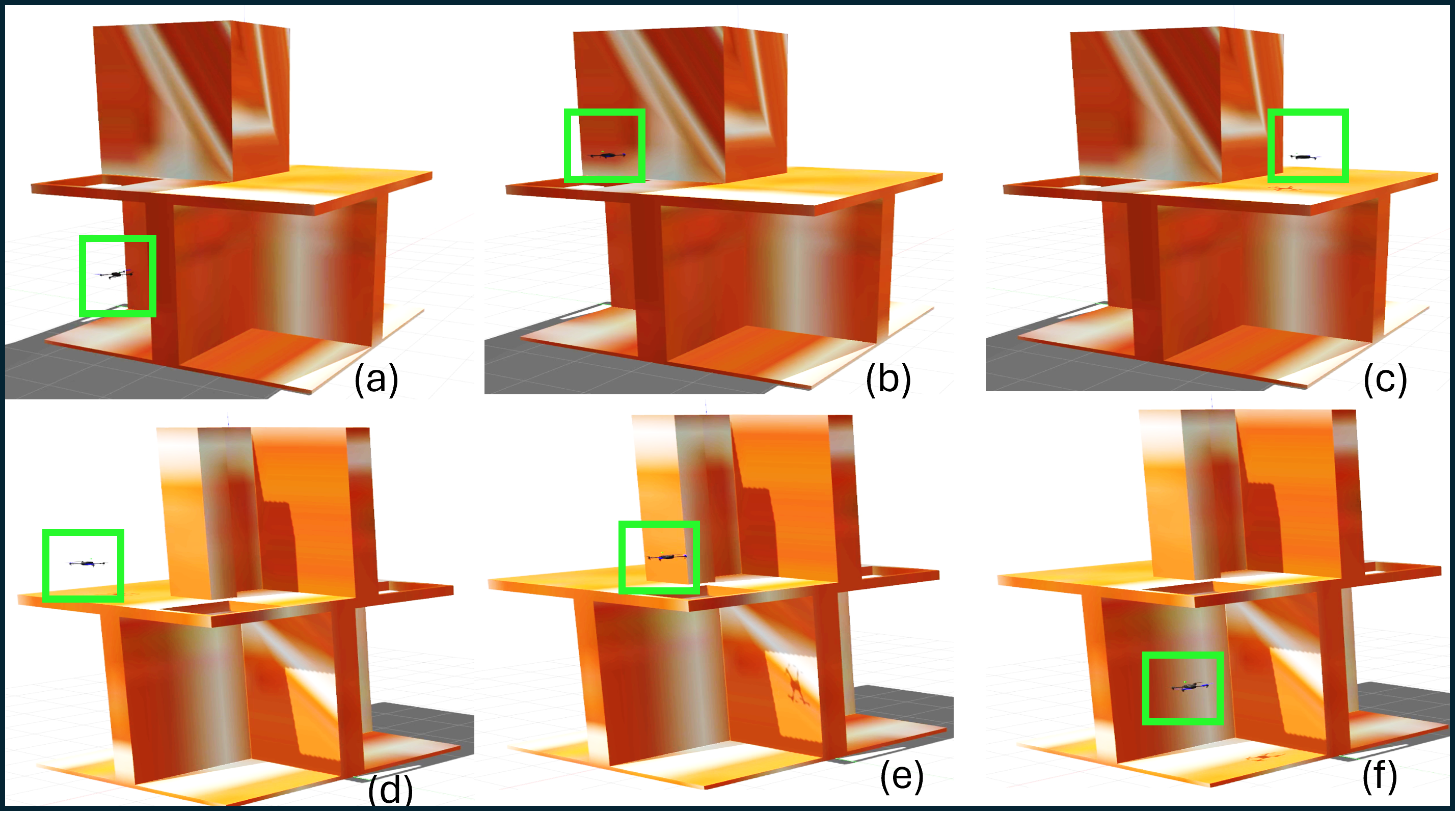}
    \caption{
    Instruction sequence \textit{Go up, then turn back, take a left followed by another left, then go down and turn back to reach your destination}. The snapshots illustrate yaw-conditioned subset sampling and ray-based switching between commands, resulting in a collision-free trajectory. The green blocks indicate the quadcopter poses at various time sequences.}
    \label{fig:gz_sim_result}
\end{figure}

\subsection{Ablation Study} \label{sec:ablation}

To evaluate the influence of the LLM component in the proposed framework Extended LASMP, we conducted an ablation study in which the LLM module was bypassed and the turn list was provided as a hard-coded input. The experimental results indicate no observable difference in performance with respect to the number of queries, the number of nodes added to the search tree, computation time, or the resulting path length. These observations suggest that the LLM in our framework functions solely as an interface for extracting navigation instructions and does not influence the underlying path computation process. As part of the ablation study as shown in Table \ref{tab:ablation}, we replaced the guided sampling strategy with a randomly oriented subset sampling approach. 
\begin{table}[!ht]
\centering
\caption{Ablation study comparing Extended LASMP with a randomly oriented subset-sampling planner in Environment~1.}
\label{tab:ablation}
\begin{adjustbox}{width=0.5\textwidth}
\begin{tabular}{|llcccc|}
\toprule
Environment & Method & Nodes & Queries  & Time [s]  & Path Length [m]  \\
\midrule

\multirow{2}{*}{Env1} 
 & Extended LASMP & \textbf{8} & \textbf{12} & \textbf{2.361}  & \textbf{9.95}  \\
 & Random subset sampling      & 54 & 240 & 153  & 11.53 \\

\bottomrule
\end{tabular}
\end{adjustbox}
\end{table}
The randomly oriented subset sampling planner was still able to find a feasible path; however, the convergence time increased significantly (approximately 68 times higher), and the search tree contained a substantially larger number of nodes (approximately 7 times more). Also, it required about 20 times more queries as compared to Extended LASMP. Although the resulting path length of the randomly oriented subset sampling planner is comparable, Extended LASMP outperformed across all other metrics. These results indicate that while random sampling preserves path feasibility, the proposed guided sampling strategy significantly improves search efficiency by reducing unnecessary exploration.

\section{Conclusions}\label{sec:conclusion}

We utilized a large language model to extract the motion-related entities, which serves as a key step of guiding our planner to efficiently compute the path.
While sampling-based planners like RRT and RRT* may achieve shorter paths in simple environments, they frequently fail to converge in complex scenarios involving narrow passages. In contrast, our Extended LASMP approach demonstrates superior reliability and command-following efficiency. By prioritizing successful instruction execution over path optimality, Extended LASMP proves to be a robust planner for autonomous navigation.

Future work will focus on deploying the planner on a hardware system to evaluate performance under sensing uncertainties. Another direction is to incorporate dynamic obstacle handling and online replanning to support navigation in partially known or changing environments. Additionally, integrating richer semantic instructions and multi-step task reasoning could further enhance the capability of language-guided aerial navigation.

\section{Appendix}\label{sec:appendix}
\subsection{Probabilistic Completeness of Extended LASMP} \label{sec:prob_complete}
To prove the probabilistic completeness property of the Extended LASMP planner, let us assume that the collision-free space $D_{\mathrm{free}} \subset D$ is open and measurable, and that there exists a continuous feasible path $\sigma:[0,1]\to D_{\mathrm{free}}$ from $\bm{x}_s$ to $\bm{x}_g$ with strong $\delta$-clearance for some $\delta>0$, i.e., $\mathcal{B}(\sigma(t),\delta)\subset D_{\mathrm{free}}$ for all $t\in[0,1]$. We assume that the steering function can connect any two states whose distance is at most $\eta$ whenever the local path between them lies entirely in $D_{\mathrm{free}}$. Furthermore, we assume that along the feasible path there exists a finite sequence of milestones such that whenever the tree reaches a neighborhood of one milestone, the corresponding command-aligned sampling subset contains a neighborhood of the next milestone. Finally, we assume that the sampling distribution over each subset $D_{\mathrm{smp}}(\bm{x})$ has density bounded below by a positive constant, ensuring that every measurable region within the subset has nonzero probability of being sampled.

\begin{theorem}
\textit{If a feasible path exists from $\bm{x}_s$ to $\bm{x}_g$, the planner finds a collision-free path with probability approaching $1$ as the number of iterations $N\to\infty$}.
\end{theorem}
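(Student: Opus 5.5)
The plan is the classical RRT completeness argument of covering the path by balls, adapted to the milestone assumption. Using the strong $\delta$-clearance of $\sigma$, I pick the finite milestone sequence $\bm{y}_0=\bm{x}_s,\bm{y}_1,\ldots,\bm{y}_M=\bm{x}_g$ on $\sigma$ from the standing assumption. I refine it if needed so that consecutive milestones satisfy $\|\bm{y}_{i+1}-\bm{y}_i\|\le \eta/2$ and $\rho\le\delta/4$. Around each milestone I take the ball $B_i=\mathcal{B}(\bm{y}_i,\rho)\subset D_{\mathrm{free}}$. Then any state in $B_i$ is within $\eta$ of any state in $B_{i+1}$, and the straight segment between them lies within $\delta$ of $\sigma$, hence in $D_{\mathrm{free}}$. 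By the steering assumption, a node in $B_i$ can therefore always be connected to a sample in $B_{i+1}$. The milestone assumption further guarantees that, once the tree has a node $\bm{x}\in B_i$, the active subset $D_{\mathrm{smp}}(\bm{x},\phi_j)$ contains a neighborhood of $\bm{y}_{i+1}$. Shrinking $\rho$ if necessary, I take that neighborhood to contain $B_{i+1}$, or at least a fixed ball $B'_{i+1}\subset B_{i+1}$.

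\textbf{Step 1 (uniform per-iteration success probability).} Condition on the event that the tree already contains a node in $B_i$ and that this node is chosen for expansion. By the density lower bound $c>0$, the sample lands in $B'_{i+1}$ with probability at least $p_{i}=c\,\mu(B'_{i+1})>0$, where $\mu$ is the measure on the subset, i.e. on the position component with yaw fixed to $\psi_j$. When that happens, steering places a new node in $B_{i+1}$. Let $p=\min_i p_i>0$; this bound does not depend on which node of $B_i$ was used.

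\textbf{Step 2 (geometric waiting times).} Let $\tau_i$ be the number of iterations spent while the frontier sits in $B_i$ before a node appears in $B_{i+1}$. Each iteration succeeds with conditional probability at least $p$ given the past, so $\tau_i$ is stochastically dominated by a geometric random variable with parameter $p$. The probability that $B_M$ has not been reached after $N$ iterations is then at most $\Pr\bigl(\sum_{i<M}\tau_i>N\bigr)$. This is bounded by a tail of a sum of $M$ geometric variables, which is at most $\sum_{k<M}\binom{N}{k}(1-p)^{N-k}$. That quantity tends to $0$ as $N\to\infty$ because $M$ is fixed. Reaching $B_M$ gives $\|\bm{p}(\bm{x}_{\mathrm{curr}})-\bm{p}(\bm{x}_g)\|\le\epsilon_g$ once $\rho\le\epsilon_g$, and the returned tree path is collision-free by construction.

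\textbf{Main obstacle.} The difficulty is reconciling the algorithm's actual expansion rule with Step 1. Algorithm~\ref{alg:ext_lasmp} builds the subset around $\bm{x}_{\mathrm{curr}}$ and switches commands via \textsc{RayFree}, so the analysis must ensure two things. First, the node in $B_i$ is the one being expanded. Second, the active command index $j$ is the one the milestone assumption refers to. My first approach is to read the milestone assumption as indexed jointly by milestone and command. I would argue that ray switching is a deterministic function of position, so whenever $\bm{x}_{\mathrm{curr}}\in B_i$ the command is the one associated with that milestone. I would also check that the step-wise steering loop, including the early \textsc{RayFree} break, still terminates inside $B_{i+1}$ when the sample lies in $B'_{i+1}$. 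If early breaks can strand the frontier, I would shrink the ball so that any break point within it is itself a milestone neighborhood. Failing that, I would invoke the stated fallback to uniform RRT, whose completeness is classical, to cover infeasible command lists.
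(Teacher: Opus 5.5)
Your skeleton is the paper's: a ball covering of $\sigma$ sized by $\delta$ and $\eta$, a uniform per-iteration progress probability from the density lower bound on a ball inside the command subset, and the triangle inequality plus clearance to justify steering, followed by a limit. Two parts of your version improve on the paper. Your Step 2 (domination by a sum of $M$ geometric variables, then a binomial tail) is cleaner than the paper's last step, which only asserts failure probability $\le(1-q)^N$ per region and appeals to finiteness. Taking $\mu$ on the position component with yaw fixed is also more careful, since a ball in $\mathbb{R}^3\times\mathbb{S}^1$ cannot lie in the slice $C(\bm{x},\phi_j)\times\{\psi_j\}$. Your constants are wrong, though. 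With spacing $\eta/2$ and $\rho\le\delta/4$, points of $B_i$ and $B_{i+1}$ are only within $\eta/2+2\rho\le\eta/2+\delta/2$ of each other, which exceeds $\eta$ when $\delta>\eta$. The segment between them is only guaranteed to lie in $\mathcal{B}(\bm{y}_i,\rho+\eta/2)$, which need not stay inside the $\delta$-tube when $\eta$ is large compared with $\delta$. Both spacing and radius must be controlled by $\min\{\delta,\eta\}$. The paper takes $r<\min\{\delta/2,\eta/3\}$ with spacing $\le r$, so $\|\bm{x}_{\mathrm{rand}}-\bm{x}\|\le 3r\le\eta$ and both endpoints lie in the convex ball $\mathcal{B}(\bm{y}_i,2r)\subset D_{\mathrm{free}}$.

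The obstacle you flag yourself is a real hole in your write-up, and the fixes you sketch would not close it. Step 2 needs conditional success probability at least $p$ on every iteration until $B_{i+1}$ is hit, and Algorithm~\ref{alg:ext_lasmp} does not deliver this for three reasons:
\begin{itemize}
\item $\bm{x}_{\mathrm{curr}}$ is overwritten by every added node, so a collision-free extension toward a sample outside $B'_{i+1}$ moves the subset anchor out of $B_i$.
\item The inner-loop \textsc{RayFree} break can stop short of $B_{i+1}$ and advance $j$.
\item $j$ is not a function of position. It is monotone and history-dependent, and $\psi_j$ is recomputed from $\psi(\bm{x}_{\mathrm{curr}})$ on every pass. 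So ``whenever $\bm{x}_{\mathrm{curr}}\in B_i$ the command is the right one'' is false.
\end{itemize}
Refining the milestone sequence is also not licensed, because the coverage property is assumed only for the given milestones, not for inserted ones. Falling back to uniform RRT proves completeness of a different planner, which is what the mixture corollary is for, not of this one. The paper never derives any of this from the algorithm. It applies the coverage assumption directly to the fine covering $R_0,\ldots,R_m$, read as ``whenever $T$ meets $R_i$, some expansion node $\bm{x}\in R_i$ has $R_{i+1}\subset D_{\mathrm{smp}}(\bm{x})$''. It then uses the steering assumption to reach $\bm{x}_{\mathrm{rand}}$. To finish on the paper's terms, state that reading explicitly as a hypothesis rather than extracting it from Algorithm~\ref{alg:ext_lasmp}. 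It should apply to your fine covering, hold uniformly in $\bm{x}\in B_i$, and keep the expansion anchor in $B_i$ until success.
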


\begin{proof}
\textit{Path covering:} Let $\delta>0$ denote the clearance of the feasible path $\sigma$ and let $\eta>0$ be the steering step size.
Choose a radius
\begin{equation}
    r \in \Bigl(0,min\{\tfrac{\delta}{2},\tfrac{\eta}{3}\}\Bigr)
\end{equation}
Since $\sigma$ is continuous on $[0,1]$, there exists a finite sequence of points $\bm{y}_0,\ldots,\bm{y}_m$ on $\sigma$ with $\bm{y}_0=\bm{x}_s$, $\bm{y}_m=\bm{x}_g$, and $\|\bm{y}_{i+1}-\bm{y}_i\|\le r$ $\forall i$.
Define $R_i \triangleq \mathcal{B}(\bm{y}_i,r)$. Because $2r<\delta$, we have $\mathcal{B}(\bm{y}_i,2r)\subset D_{\mathrm{free}}$ $\forall i$.

\textit{Positive progress probability: }
Assume that the tree $T$ contains a node in $R_i$. By the subset coverage assumption, there exists an expansion node $\bm{x} \in R_i$ such that the next milestone region $R_{i+1}$ is contained in the command-aligned sampling subset $D_{\mathrm{smp}}(\bm{x})$. Let $\mu(\cdot)$ denote the Lebesgue measure on the continuous components of the state space. By the assumption that the sampling density on each subset is bounded below by a constant $\lambda>0$, for any measurable set $A \subset D_{\mathrm{smp}}(\bm{x})$
\begin{equation}
    \mathbb{P}(\bm{x}_{\mathrm{rand}}\in A) \ge \lambda\,\frac{\mu(A)}{\mu(D_{\mathrm{smp}}(\bm{x}))}
\end{equation}

Applying this to $A=R_{i+1}$ yields
\begin{equation}
    \mathbb{P}(\bm{x}_{\mathrm{rand}}\in R_{i+1}) \ge \lambda\,\frac{\mu(R_{i+1})}{\mu(D_{\mathrm{smp}}(\bm{x}))} \ge c > 0
\end{equation}
the constant $c$ is independent of $i$ since $R_{i+1}$ has fixed positive measure, $D_{\mathrm{smp}}(\bm{x})$ has uniformly bounded volume. If $\bm{x}\in R_i$ and $\bm{x}_{\mathrm{rand}}\in R_{i+1}$, by triangle inequality,
\begin{multline}
    \|\bm{x}_{\mathrm{rand}}-\bm{x}\| \le \|\bm{x}_{\mathrm{rand}}-\bm{y}_{i+1}\| \\
    + \|\bm{y}_{i+1}-\bm{y}_i\| + \|\bm{y}_i-\bm{x}\| \le 3r \le \eta
\end{multline}

\noindent
Since $R_i$ and $R_{i+1}$ lie inside the $\delta$-clear tube around $\sigma$, the local connection between $\bm{x}$ and $\bm{x}_{\mathrm{rand}}$ remains in $D_{\mathrm{free}}$. By the steering assumption, planner can add a node in $R_{i+1}$. Thus, each iteration advances from $R_i$ to $R_{i+1}$ with probability at least $q>0$ for some constant $q$.

\textit{Convergence: }
The probability of failing to reach $R_{i+1}$ after $N$ iterations is $\le (1-q)^N \to 0$. Since the number of regions is finite, the probability of reaching $R_m$ (and hence $\bm{x}_g$) approaches $1$ as $N\to\infty$.
\end{proof}

\subsubsection{Corollary (Mixture Sampling)}
If samples are drawn from
\begin{equation}
    \bm{x}_{\mathrm{rand}} \sim
    \begin{cases}
    \mathrm{Uniform}(D), & \text{w.p. } \varepsilon\\
    \mathrm{Uniform}(D_{\mathrm{smp}}(\bm{x})), & \text{w.p. } 1-\varepsilon
    \end{cases}
\end{equation}
with $\varepsilon>0$, every open subset of $D_{\mathrm{free}}$
has nonzero sampling probability.
Hence probabilistic completeness follows from standard RRT assumptions.

\bibliographystyle{IEEEtran}

\bibliography{thebibliography}

@article{lavalle1998rapidly,
  title={Rapidly-exploring random trees: A new tool for path planning},
  author={LaValle, Steven},
  journal={Research Report 9811},
  year={1998},
  publisher={Department of Computer Science, Iowa State University}
}

@inproceedings{kuffner2000rrt,
  title={RRT-connect: An efficient approach to single-query path planning},
  author={Kuffner, James J and LaValle, Steven M},
  booktitle={Proceedings 2000 ICRA. Millennium Conference. IEEE International Conference on Robotics and Automation. Symposia Proceedings (Cat. No. 00CH37065)},
  year={2000},
  organization={IEEE}
}

@article{karaman2011sampling,
  title={Sampling-based algorithms for optimal motion planning},
  author={Karaman, Sertac and Frazzoli, Emilio},
  journal={The international journal of robotics research},
  volume={30},
  number={7},
  pages={846--894},
  year={2011},
  publisher={Sage Publications Sage UK: London, England}
}

@inproceedings{gammell2014informed,
  title={Informed RRT: Optimal sampling-based path planning focused via direct sampling of an admissible ellipsoidal heuristic},
  author={Gammell, Jonathan D and Srinivasa, Siddhartha S and Barfoot, Timothy D},
  booktitle={2014 IEEE/RSJ International Conference on Intelligent Robots and Systems},
  year={2014},
  organization={IEEE}
}

@inproceedings{gammell2015batch,
  title={Batch informed trees (BIT*): Sampling-based optimal planning via the heuristically guided search of implicit random geometric graphs},
  author={Gammell, Jonathan D and Srinivasa, Siddhartha S and Barfoot, Timothy},
  booktitle={ICRA},
  pages={3067--3074},
  year={2015},
  organization={IEEE}
}

@article{ahn2022can,
  title={Do as i can, not as i say: Grounding language in robotic affordances},
  author={Ahn, Michael and Brohan, Anthony and Brown, Noah and Chebotar, Yevgen and Cortes, Omar and David, Byron and Finn, Chelsea and Fu, Chuyuan and Gopalakrishnan, Keerthana and Hausman, Karol and others},
  journal={arXiv preprint arXiv:2204.01691},
  year={2022}
}

@inproceedings{xie2023language,
  title={Language-conditioned path planning},
  author={Xie, Amber and Lee, Youngwoon and Abbeel, Pieter and James, Stephen},
  booktitle={Conference on Robot Learning},
  pages={3384--3396},
  year={2023},
  organization={PMLR}
}

@article{wu2024vision,
  title={Vision-language navigation: a survey and taxonomy},
  author={Wu, Wansen and Chang, Tao and Li, Xinmeng and Yin, Quanjun and Hu, Yue},
  journal={Neural Computing and Applications},
  volume={36},
  number={7},
  pages={3291--3316},
  year={2024},
  publisher={Springer}
}

@inproceedings{bhattacharjee2025lasmp,
  title={LASMP: Language Aided Subset Sampling Based Motion Planner},
  author={Bhattacharjee, Saswati and Sinha, Anirban and Ghosh, Mukulika and Ekenna, Chinwe},
  booktitle={2025 IEEE 21st Intl. Conf. on Automation Science and Engineering (CASE)},
  pages={1632--1639},
  year={2025},
  organization={IEEE}
}

@article{kavraki2002probabilistic,
  title={Probabilistic roadmaps for path planning in high-dimensional configuration spaces},
  author={Kavraki, Lydia E and Svestka, Petr and Latombe, J-C and Overmars, Mark H},
  journal={IEEE transactions on Robotics and Automation},
  volume={12},
  number={4},
  pages={566--580},
  year={2002},
  publisher={IEEE}
}

@article{janson2015fast,
  title={Fast marching tree: A fast marching sampling-based method for optimal motion planning in many dimensions},
  author={Janson, Lucas and Schmerling, Edward and Clark, Ashley and Pavone, Marco},
  journal={The Intl. journal of robotics research},
  volume={34},
  number={7},
  pages={883--921},
  year={2015},
  publisher={SAGE Publications Sage UK: London, England}
}

@inproceedings{yershova2005dynamic,
  title={Dynamic-domain RRTs: Efficient exploration by controlling the sampling domain},
  author={Yershova, Anna and Jaillet, L{\'e}onard and Sim{\'e}on, Thierry and LaValle, Steven M},
  booktitle={Proc. of the 2005 IEEE intl. conf. on robotics and automation},
  pages={3856--3861},
  year={2005},
  organization={IEEE}
}

@inproceedings{luders2010bounds,
  title={Bounds on tracking error using closed-loop rapidly-exploring random trees},
  author={Luders, Brandon D and Karaman, Sertac and Frazzoli, Emilio and How, Jonathan P},
  booktitle={Proceedings of the 2010 american control conference},
  pages={5406--5412},
  year={2010},
  organization={IEEE}
}

@inproceedings{mellinger2011minimum,
  title={Minimum snap trajectory generation and control for quadrotors},
  author={Mellinger, Daniel and Kumar, Vijay},
  booktitle={2011 IEEE intl. conf. on robotics and automation},
  pages={2520--2525},
  year={2011},
  organization={IEEE}
}

@inproceedings{richter2016polynomial,
  title={Polynomial trajectory planning for aggressive quadrotor flight in dense indoor environments},
  author={Richter, Charles and Bry, Adam and Roy, Nicholas},
  booktitle={Robotics Research: The 16th International Symposium ISRR},
  pages={649--666},
  year={2016},
  organization={Springer}
}

@inproceedings{shah2021ving,
  title={Ving: Learning open-world navigation with visual goals},
  author={Shah, Dhruv and Eysenbach, Benjamin and Kahn, Gregory and Rhinehart, Nicholas and Levine, Sergey},
  booktitle={2021 IEEE International Conference on Robotics and Automation (ICRA)},
  pages={13215--13222},
  year={2021},
  organization={IEEE}
}

@inproceedings{chen2024spatialvlm,
  title={Spatialvlm: Endowing vision-language models with spatial reasoning capabilities},
  author={Chen, Boyuan and Xu, Zhuo and Kirmani, Sean and Ichter, Brain and Sadigh, Dorsa and Guibas, Leonidas and Xia, Fei},
  booktitle={Proceedings of the IEEE/CVF Conference on Computer Vision and Pattern Recognition},
  pages={14455--14465},
  year={2024}
}

@article{yang2025qwen3,
  title={Qwen3 technical report},
  author={Yang, An and Li, Anfeng and Yang, Baosong and Zhang, Beichen and Hui, Binyuan and Zheng, Bo and Yu, Bowen and Gao, Chang and Huang, Chengen and Lv, Chenxu and others},
  journal={arXiv preprint arXiv:2505.09388},
  year={2025}
}

@manual{px4_ros2_user_guide,
  title={PX4 ROS 2 User Guide},
  year={2026},
  url={https://docs.px4.io/main/en/ros2/user_guide},
  note={Accessed: 2026-03-01}
}

@inproceedings{radford2023robust,
  title={Robust speech recognition via large-scale weak supervision},
  author={Radford, Alec and Kim, Jong Wook and Xu, Tao and Brockman, Greg and McLeavey, Christine and Sutskever, Ilya},
  booktitle={International conference on machine learning},
  pages={28492--28518},
  year={2023},
  organization={PMLR}
}

@inproceedings{tellex2011understanding,
  title={Understanding natural language commands for robotic navigation and mobile manipulation},
  author={Tellex, Stefanie and Kollar, Thomas and Dickerson, Steven and Walter, Matthew and Banerjee, Ashis and Teller, Seth and Roy, Nicholas},
  booktitle={Proc. of the AAAI Conf. on Artificial Intel.},
  volume={25},
  year={2011}
}

@inproceedings{anderson2018vision,
  title={Vision-and-language navigation: Interpreting visually-grounded navigation instructions in real environments},
  author={Anderson, Peter and Wu, Qi and Teney, Damien and Bruce, Jake and Johnson, Mark and S{\"u}nderhauf, Niko and Reid, Ian and Gould, Stephen and Van Den Hengel, Anton},
  booktitle={Proceedings of the IEEE conference on computer vision and pattern recognition},
  pages={3674--3683},
  year={2018}
}

@inproceedings{kuo2018deep,
  title={Deep sequential models for sampling-based planning},
  author={Kuo, Yen-Ling and Barbu, Andrei and Katz, Boris},
  booktitle={2018 IEEE/RSJ International Conference on Intelligent Robots and Systems (IROS)},
  year={2018},
  organization={IEEE}
}

@article{hu2019safe,
  title={Safe navigation with human instructions in complex scenes},
  author={Hu, Zhe and Pan, Jia and Fan, Tingxiang and Yang, Ruigang and Manocha, Dinesh},
  journal={IEEE Robotics and Automation Letters},
  year={2019},
  publisher={IEEE}
}

\end{document}